\newtheorem{theorem}{Theorem}[section]
\newtheorem{proposition}[theorem]{Proposition}
\newtheorem{remark}{Remark}[section]
\newtheorem{assumption}{Assumption}[section]
\definecolor{darkgreen}{rgb}{0,0.6,0}
\definecolor{darkred}{rgb}{0.7,0.0,0}
\definecolor{darkblue}{rgb}{0,0.0,0.6}
\newcommand{\jcom}[1]{\textcolor{darkgreen}{}}
\newcommand{\xcom}[1]{\textcolor{blue}{}}
\newcommand{\jp}[1]{\textcolor{darkred}{}}
\newcommand{\sss}[1]{\textcolor{orange}{}}
\icmltitlerunning{Trade Selection with Supervised Learning and OCA}
\begin{document}

\twocolumn[
\icmltitle
{Trade Selection with Supervised Learning and OCA}

% It is OKAY to include author information, even for blind
% submissions: the style file will automatically remove it for you
% unless you've provided the [accepted] option to the icml2018
% package.

% List of affiliations: The first argument should be a (short)
% identifier you will use later to specify author affiliations
% Academic affiliations should list Department, University, City, Region, Country
% Industry affiliations should list Company, City, Region, Country

% You can specify symbols, otherwise they are numbered in order.
% Ideally, you should not use this facility. Affiliations will be numbered
% in order of appearance and this is the preferred way.
%\icmlsetsymbol{equal}{*}

\begin{icmlauthorlist}
\icmlauthor{David Saltiel}{aisquare,lisic}
\icmlauthor{Eric Benhamou}{aisquare,lamsade}
\end{icmlauthorlist}

\icmlaffiliation{aisquare}{A.I. SQUARE CONNECT, 35 Boulevard d'Inkermann, 92200 Neuilly sur Seine, France.}
\icmlaffiliation{lisic}{LISIC - Universite du Littoral - Cote d’Opale, France.}
\icmlaffiliation{lamsade}{LAMSADE, Universite Paris Dauphine, 75016 Paris, France}

\icmlcorrespondingauthor{David Saltiel}{david.saltiel@aisquareconnect.com}

\vskip 0.3in
]

% this must go after the closing bracket ] following \twocolumn[ ...
% This command actually creates the footnote in the first column
% listing the affiliations and the copyright notice.
% The command takes one argument, which is text to display at the start of the footnote.
% The \icmlEqualContribution command is standard text for equal contribution.
% Remove it (just {}) if you do not need this facility.

\printAffiliationsAndNotice{}  % leave blank if no need to mention equal contribution
%\printAffiliationsAndNotice{\icmlEqualContribution} % otherwise use the standard text.

\begin{abstract}
In recent years, state-of-the-art methods for supervised learning have exploited increasingly gradient boosting techniques, with mainstream efficient implementations such as xgboost or lightgbm. One of the key points in generating proficient methods is Feature Selection (FS). It consists in selecting the right valuable effective features. When facing hundreds of these features, it becomes critical to select best features. While filter and wrappers methods have come to some maturity, embedded methods are truly necessary to find the best features set as they are hybrid methods combining features filtering and wrapping. In this work, we tackle the problem of finding through machine learning best a priori trades from an algorithmic strategy. We derive this new method using coordinate ascent optimization and using block variables. We compare our method to Recursive Feature Elimination (RFE) and Binary Coordinate Ascent (BCA). We show on a real life example the capacity of this method to select good trades a priori. Not only this method outperforms the initial trading strategy as it avoids taking loosing trades, it also surpasses other method, having the smallest feature set and the highest score at the same time. The interest of this method goes beyond this simple trade classification problem as it is a very general method to determine the optimal feature set using some information about features relationship as well as using coordinate ascent optimization.
\end{abstract}

%\begin{keyword}
%feature selection, coordinate ascent, gradient boosting method
%\MSC[2010] 68T01, 68T05
%\end{keyword}

\section{Introduction: a motivating example}
\label{intro}
In financial markets, algorithmic trading has become more and more standard over the last few years. The rise of the machine has been particularly significant in liquid and electronic markets such as foreign exchange and futures markets reaching between 60 to 80 percent of total traded volume (see for instance \cite{Chan_2013}, \cite{Goldstein_2014} or \cite{Chaboud_2015} for more details on the various markets). These strategies are even more concentrated whenever there are very fast market moves as reported in \cite{Kirilenko_2017}.
 These algorithmic trading strategies typically relies on historical statistics. The main concept is to find some trading signals and information that identifies pattern or trend with a high probability of repetition. As desirable as it may be, the perfect algorithm is the one with the highest accuracy in terms of identifying the targeted pattern and with the smallest number of losing trades. 

If we want to increase robustness and bring additional firewalls to the trading strategy, it makes senses to add supplementary logic with the use of supervised learning method. The question is to empirically validate whether a supervised machine learning method can a priori identify bad or good trade and hence select among the systematic trades spawned by our algorithmic trading strategy. This is a typical supervised learning classification problem, very similar to the boiler plate example of identifying spam in emails. The complexity in this challenge is to identify features that are relevant to assist the machine in being able to in advance determine the chance of success of a machine based trade. 

This motivates for efficient method to select among a large set of features the ones that creates an efficient algorithm. This is precisely the subject of this paper. It is organized as follows. We first present the supervised learning classification problem. We then present the Optimal Cordinate Ascent algorithm that enables us selecting the Pareto optimal features set. The key contribution of this method is to exploit similarities between features and hence reduce the optimization search within categories as well as use coordinate ascent to transform the NP hard problem into a polynomial one. We then present results on a real life trading policy. We show that there is substantial improvement compared to the original strategy. We conclude on further work.

\section{Experience description}
\subsection{Challenge description}
A trading strategy is usually defined with some signal that generates a trading entry. But once we are in position, then next question is the trading exit strategy. There are multiple method to handle efficient exits, ranging from fixed target and stop loss, to dynamic target and stop loss.
Indeed, to enforce success and crystallize gain or limit loss, a common practice is to associate to the strategy a profit target and stop loss as described in various papers (\cite{Mauricio_2010}, \cite{Graziano_2014}, \cite{Stanley_2017}, or \cite{Vezeris_2018}).
 The profit target ensures that the strategy locks in real money the profit realized and is materialized by a limit order. The stop loss that is physically generated by a stop order safeguards the overall risk by limiting losses whenever the market backfires and contradicts the presumed pattern. To keep things simple we will hereby examine a trading strategy that has fixed profit target and stop loss. It generates about 1500 trades over a period of 10 years. For each of these trades, we make some measurements to get 135 features. 
The challenge is from these features to predict which trade is going to be successful. If we give brutally these features to a gradient boosting method like xgboost or lightgbm, the algorithm performs poorly as it is swamped by too many data that are noisy. The features that are provided are proprietary indicators whose identity and source are ignored by our machine learning algorithm. The challenge here is to find the optimal features set for our gradient boosting method. The learning process is summarized by figure \ref{learning}.

\begin{figure}
       \includegraphics[width=8cm]{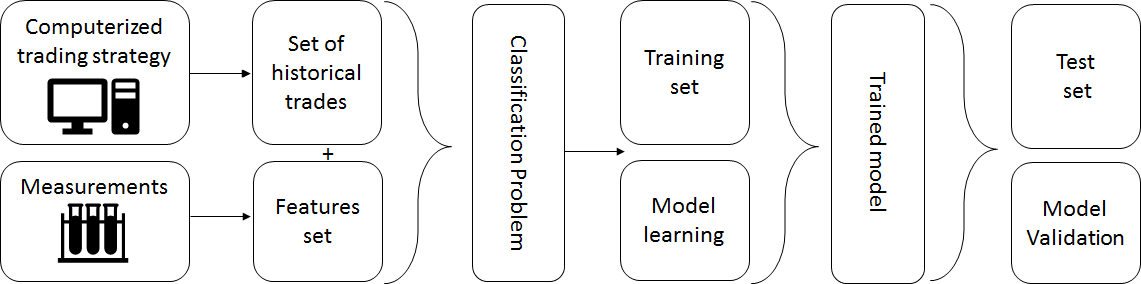}
	\caption{Learnig process for our trade selection challenge. We first use a proprietary trading strategy that generates some samples trades. We take various measures before the trades is executed to create a feature set. We combined these to create a supervised learning classification problem. Using xgboost method and OCA, we learn model parameters on a train set. We monitor overall performance of the trading strategy on a separate test set to validate scarce overfitting.}\label{learning}
	\centering
\end{figure}

\subsection{Feature selection}
Feature selection is also known as variable or attribute selection. It is the selection of a subset of relevant attributes in our data that are most relevant to our predictive modeling problem. 
It has been an active and fruitful field of research and development for decades in statistical learning.
It has proven to be effective and useful in both theory and practice for many reasons: 
enhanced learning efficiency and increasing predictive accuracy (see \cite{Mitra_2002}), model simplification to ease its interpretation and improve performance (see \cite{Almuallim_1994}, \cite{Koller_1996} and \cite{Blum_1997}), shorter training time (see \cite{Mitra_2002}), curse of dimensionality avoidance, enhanced generalization with reduced overfitting, implied variance reduction. Both \cite{HastieEtAl_2009} and \cite{Guyon_2003} are nice references to get an overview of various methods to tackle features selections.  The approaches followed varies. Briefly speaking, the methods can be sorted into three main categories: Filter method, Wrapper methods and Embedded methods. 

However, these methods do not exploit some particularities of our features set. We are able to regroup features among families. We call these features block variables. Typical example is to regroup variables that are observations of some physical quantity but at a different time (like the speed of the wind measure at different hours for some energy prediction problem, like the price of a stock in an algorithmic trading strategy for financial markets, like the temperature or heart beat of a patient at different time, etc ...).

\section{OCA Method}
The approach adopted here is the method referred to as the Optimal Cordinate Ascent (OCA) method that is described in \cite{SaltielBenhamou_2018}. Formally, we can regroup our variables into two sets:
\begin{itemize}[label={\tiny\raisebox{1ex}{\textbullet}}]
    \item the first set encompasses $B_1 \ldots B_n$. These are called block variables of different length $L_i$. Mathematically, the Block variables are denoted by $B_i$ with $B_i$ taking value in $\mathbb{R}^{L_i}\  ,\forall i \in 1 \ldots n$
    \item the second set is denoted $S$ and is a block of $p$ single variables.
\end{itemize}

Graphically, our variables looks like that:
\tiny
\begin{center}
$
\begin{pmatrix}
	\overbrace{
	\begin{matrix}
	    B_{1,1} & \ldots  & B_{1,n} \\
	    \hline
	    \bullet   & \ldots  & \bullet  \\ 
	     \vdots &  & \vdots  \\ 
	    \bullet  & \ldots  & \bullet 
	\end{matrix}
	}^{B_1}
	&
	\ldots \ldots
	&
	\overbrace{
	\begin{matrix}
	    B_{1,1} & \ldots  & B_{1,n} \\
	    \hline
	    \bullet   & \ldots  & \bullet  \\ 
	     \vdots &  & \vdots  \\ 
	    \bullet  & \ldots  & \bullet 
	\end{matrix}
	}^{B_n}
	&
	\overbrace{
	\begin{matrix}
	    S_{1} & \ldots  & S_{p} \\
	    \hline
	    \bullet  & \ldots  & \bullet  \\ 
	    \vdots &  & \vdots \\ 
	    \bullet  & \ldots  & \bullet 
	\end{matrix}
	}^{\text{S}}
\end{pmatrix}
$
\end{center}
\normalsize

In addition, we have $N$ variables split between block variables and single variables, hence $N = N_B +p$ with $N_B = \sum_{i=1}^n L_i$.

\subsection{Algorithm description}
Our algorithm works as follows. We first fit our classification model to find a ranking of features importance. The performance is computed with the Gini index for each variable. We then keep the first $k$ best ranked features for each blocks $B_1 \ldots B_n$ in order to find the best initial guess for our coordinate ascent algorithm. Notice that the set of unique variables is not modified during the first step of the procedure. The objective function is the number of  correctly classified samples at each iteration.
We then enter the main loop of the algorithm. Starting with the vector of $\left(k,\ldots ,k, \ \mathbbm{1}_p ^T \right) $ 
as the initial guess for our algorithm, we perform our coordinate ascent optimization in order to find the set with optimal score and the minimum number of features. The coordinate ascent loop stops whenever we either reach the maximum number of iterations or the current optimal solution has not moved between two steps.

\begin{algorithm}
\caption{OCA algorithm}\label{OCA}
\begin{algorithmic}
\STATE \textbf{J Best optimization}
\STATE We retrieve features importance from a fitted model
\STATE We find the index $k^\star$ that gives the best score for variables block of same size $k$: 
\STATE $k^\star \in \underset{k \in \mathbb{R}^{L_{\text{min}}}}{\text{argmax }} \text{Score}\left( k,\  \ldots ,\  k,\ \mathbbm{1}_p \right)$  
\COMMENT{$L_{\text{min}} =\underset{i \in \mathbb{R}^n}{\text{min }}L_i $}
\STATE Initial guess : $x^0 = \left(k^\star, \ldots, k^{\star}, \mathbbm{1}_p  \right)$ 

\WHILE{ $|\text{Score}( x^i )  \kern-0.5em - \kern-0.5em  \text{Score}( x^{i-1} ) |   \geq   \varepsilon_1  $ and    $ i   \leq \text{Iter max}_1 $}
	\STATE $ x_1^{i} \in \underset{j \in \mathbb{R}^{L_1}}{\text{argmax}} \text{ Score}\left( j,\  x_2^{i-1},\  x_3^{i-1} ,\  \ldots\ ,\  x_n^{i-1}, \mathbbm{1}_p\right)$
	\STATE ...
	\STATE $ x_n^{i} \in \underset{j \in \mathbb{R}^{L_n}}{\text{argmax}} \text{ Score}\left( x_1^{i},\  x_2^{i},\  x_3^{i} ,\ \ldots\ , j, \mathbbm{1}_p  \right)$
	\STATE i += 1
\ENDWHILE
\STATE
\STATE \textbf{Full coordinate ascent optimization}
\STATE Use previous solutions: $X^* = ( x_1^i, \ldots, x_n^{i},  \mathbbm{1}_p)$ \COMMENT{i is the last index in previous while loop}
\STATE $Y^* = \text{Score}\left(X^*\right)$ 

\WHILE{ $|Y-Y^*| \geq \varepsilon_2$ and iteration $\leq \text{Iter max}_2$ }
	\FOR{i=1 \ldots N}
		\STATE $ X = X^* $
		\STATE $ X_i = $not$\left(X_i^*\right) $			\COMMENT{not(0) = 1 and not(1) = 0}
		\IF{$ \text{Score}\left(X\right) \geq  \text{Score}\left(X^*\right)$} 
			\STATE $ X^* = X $
		\ENDIF
	\ENDFOR
	\STATE $ Y = \text{Score}\left(X^*\right) $
	\STATE iteration += 1
\ENDWHILE
\STATE Return $X^*, Y^*$
\end{algorithmic}
\end{algorithm}

We summarize the algorithm in the pseudo code \ref{OCA}. We denote by $\varepsilon$ the tolerance for the convergence stopping condition. To control early stop, we use a precision variable denoted by $\varepsilon_1, \varepsilon_2$ and two iteration maximum $\text{Iteration max}_1$ and $\text{Iteration max}_2$ that are initialized before starting the algorithm. We also denote $\text{Score}( k_1,\  \ldots ,\  k_n,\ \mathbbm{1}_p)$ to be the accuracy score of our classifier with each $B_i$ block of variables retaining $k_i$ best variables and with single variable all retained.

\begin{remark}
The originality of this coordinate ascent optimization is to regroup variable by block, hence it reduces the number of iterations compared to Binary Coordinate Ascent (BCA) as presented in \cite{ZARSHENAS_2016}
The stopping condition can be changed to accommodate for other stopping conditions.
\end{remark}
\begin{remark}
The specificity of our method is to keep the $j$ best representative features for each feature class, as opposed to other methods that only select one representative feature from each group, ignoring the strong similarities between each feature of a given variable block. This takes in particular the opposite view of feature Selection with Ensembles, Artificial Variables, and Redundancy Elimination as developed in \cite{Tuv_2009}.
\end{remark}

\section{Theoretical convergence speed}
Although it may be hard to determine the convergence speed for a real life example, under some weak conditions, we can prove that the convergence speed in linear. Hence we changed dramatically the nature of the problem as this method converts an NP hard problem into a polynomial one, making it feasible in a couple of minutes to train our model.

To formalize the concept, let us assume we examine the following optimization program: $\underset{x}{\min} f(x)$. We denote by $e_i$ the traditional vector with $0$ for any coordinate except $1$  for coordinate $i$. It is the vector of the canonical basis.

\begin{assumption} \label{assumption_convergence_prop}
We assume our function $f$ is twice differentiable and strongly convex with respect to the Euclidean norm:
\begin{equation}
f(y) \geq f(x) + \nabla f(x)^T (y-x) + \frac{\sigma}{2} \left\| y-x \right\|_2^2 
\label{strong_convexity}
\end{equation}
for some $\sigma > 0$ and any $x,y \in \mathbb{R}^n$. We also assume that each gradient's coordinate is uniformly $L_i$ Lipschitz, that is, there exists a constant $L_i$ such that for any $x \in \mathbb{R}^n, t \in \mathbb{R}$
\begin{equation}
\label{lipschitz}
\left| \left[\nabla f(x+t e_i)\right]_i - \left[\nabla f(x) \right]_i \right| \leq L_i \left| t \right|
\end{equation}
We denote by  $L_{\max}$ the maximum of these Lipschitz coefficients :
\begin{equation}
\label{Lmax}
L_{\text{max}} = \underset{i = 1 \ldots n}{max} \ L_i
\end{equation}

We assume that the minimum of $f$ denoted by $f^\star$ is attainable and that the left value of the epigraph with respect to our initial starting point $x_0$ is bounded, that is
\begin{equation}
\label{R0}
\underset{x}{\max} \left\{ \left\|  x - x^\star \right\| : f(x) \leq f(x_0) \right\}  \leq R_0
\end{equation}
\end{assumption}

\begin{remark}
Strong convexity means that the function is between two parabolas. Condition \ref{lipschitz} implies that the Gradient's growth is at most linear. Inequality \ref{R0} States that the function is increasing at infinity.
\end{remark}

\begin{proposition} \label{proposition1}
Under assumption \ref{assumption_convergence_prop}, coordinate ascent optimization (cf. Algorithm \ref{OCA}) converges to the global minimum $f^*$ at a linear rate proportional to $2n L_{\max} R_0^2$, that is
\begin{equation}
\label{theorem1}
\mathbb{E}[ f\left( x_k \right) ] - f^\star \leq \frac{2nL_{\text{max}}R_0^2}{k}
\end{equation}
\end{proposition}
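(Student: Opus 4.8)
The plan is to follow the classical analysis of randomized coordinate descent (in the spirit of Nesterov's efficiency estimates), adapted to the exact one-dimensional maximization performed in Algorithm~\ref{OCA}. The argument splits into four stages: a per-step sufficient-decrease estimate coming from the coordinate-wise Lipschitz bound, averaging over the randomly selected coordinate, a lower bound on the gradient norm obtained from convexity together with the level-set radius $R_0$, and finally solving the resulting scalar recursion for the optimality gap.

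First I would establish the descent lemma. Fix an iterate $x$ and a coordinate $i$. The coordinate-wise $L_i$-Lipschitz condition \eqref{lipschitz} gives the one-dimensional quadratic majorant $f(x + t e_i) \le f(x) + t\,[\nabla f(x)]_i + \tfrac{L_i}{2} t^2$; minimizing the right-hand side over $t \in \mathbb{R}$ produces a guaranteed decrease of at least $\tfrac{1}{2 L_i}[\nabla f(x)]_i^2 \ge \tfrac{1}{2 L_{\max}}[\nabla f(x)]_i^2$, using \eqref{Lmax}. Since the algorithm performs an \emph{exact} line search along $e_i$, the new iterate $x^{+}$ does at least as well as the step $t = -[\nabla f(x)]_i/L_i$, so $f(x^{+}) \le f(x) - \tfrac{1}{2 L_{\max}}[\nabla f(x)]_i^2$. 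In particular $f$ is nonincreasing along the whole trajectory, hence every iterate stays in the sublevel set $\{x : f(x) \le f(x_0)\}$ and therefore $\|x_k - x^{\star}\| \le R_0$ by \eqref{R0}. Drawing $i$ uniformly from $\{1,\dots,n\}$ and using $\mathbb{E}_i\big[[\nabla f(x)]_i^2\big] = \tfrac1n\|\nabla f(x)\|_2^2$ yields $\mathbb{E}_i[f(x^{+})] \le f(x) - \tfrac{1}{2 n L_{\max}}\|\nabla f(x)\|_2^2$.

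Next I would turn this into a recursion for the gap. Convexity of $f$ and Cauchy--Schwarz give $f(x_k) - f^{\star} \le \nabla f(x_k)^T(x_k - x^{\star}) \le R_0\,\|\nabla f(x_k)\|_2$, so $\|\nabla f(x_k)\|_2^2 \ge (f(x_k) - f^{\star})^2 / R_0^2$. Writing $\delta_k := \mathbb{E}[f(x_k)] - f^{\star}$, taking total expectation in the previous display and applying Jensen's inequality to the quadratic term gives $\delta_{k+1} \le \delta_k - \tfrac{1}{2 n L_{\max} R_0^2}\,\delta_k^2$. Because $\delta_{k+1} \le \delta_k$, dividing by $\delta_k \delta_{k+1}$ yields $\tfrac{1}{\delta_{k+1}} \ge \tfrac{1}{\delta_k} + \tfrac{1}{2 n L_{\max} R_0^2}$; telescoping from $0$ to $k$ and discarding the nonnegative term $1/\delta_0$ gives $\delta_k \le \tfrac{2 n L_{\max} R_0^2}{k}$, which is exactly \eqref{theorem1}.

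I expect the delicate points to be the following. The probabilistic bookkeeping needs care: the decrease estimate is really a statement about the conditional expectation given $x_k$, and one must invoke the tower property together with Jensen's inequality in the correct direction ($\mathbb{E}[\delta_k^2] \ge (\mathbb{E}[\delta_k])^2$) before the telescoping step is valid. One must also make explicit that the exact coordinate maximization attains at least the guaranteed decrease, so that the sublevel-set containment, and hence $\|x_k - x^{\star}\| \le R_0$ for all $k$, is legitimate. Finally I would remark that the strong convexity assumed in Assumption~\ref{assumption_convergence_prop} is in fact more than is used for the $O(1/k)$ bound stated here: replacing the convexity-plus-$R_0$ estimate by $\|\nabla f(x)\|_2^2 \ge 2\sigma\,(f(x) - f^{\star})$ would upgrade the conclusion to a genuinely geometric rate $\delta_k \le \big(1 - \sigma/(n L_{\max})\big)^k \delta_0$, so the result could alternatively be stated in that sharper form.
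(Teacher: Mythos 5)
Your argument is correct and complete. Note, however, that the paper itself contains no proof of Proposition \ref{proposition1}: it simply defers to appendix A.1 of the cited reference \cite{SaltielBenhamou_2018}. What you have written is the standard Nesterov-type efficiency estimate for randomized coordinate descent (quadratic majorant from the coordinate-wise Lipschitz condition, exact line search dominating the $-[\nabla f(x)]_i/L_i$ step, averaging over a uniformly drawn coordinate, the convexity-plus-$R_0$ lower bound on the gradient, Jensen in the right direction, and the $1/\delta_k$ telescoping), which is almost certainly the content of that appendix, so your route and the paper's coincide in substance. Your closing remark that strong convexity upgrades the bound to a geometric rate $\left(1-\sigma/(nL_{\max})\right)^k(f(x_0)-f^\star)$ is precisely the paper's Proposition \ref{proposition2}.

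One point deserves to be made explicit rather than left implicit. Algorithm \ref{OCA} as written is a \emph{deterministic cyclic} sweep over coordinates, whereas your per-iteration estimate $\mathbb{E}_i[f(x^{+})]\le f(x)-\tfrac{1}{2nL_{\max}}\|\nabla f(x)\|_2^2$ requires the coordinate to be drawn uniformly at random --- that is the only reading under which the expectation in \eqref{theorem1} is meaningful. This is really a defect of the proposition's statement (it asserts a bound in expectation for an algorithm with no randomness), but since you are supplying the proof you should state up front that you analyze the randomized-coordinate variant, and that the cyclic version would require a different (and weaker, or at least differently-constanted) analysis. A second, minor point: the sublevel-set containment $f(x_{k+1})\le f(x_k)$ that you use to invoke \eqref{R0} holds pathwise for the exact-line-search iterates, so $\|x_k-x^\star\|\le R_0$ holds surely, not merely in expectation; it is worth saying so, since the bound $f(x_k)-f^\star\le R_0\|\nabla f(x_k)\|_2$ is applied before taking expectations.
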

\begin{proof}
See \cite{SaltielBenhamou_2018} appendix A.1 first part of the proof.
\end{proof}

\begin{proposition} \label{proposition2}
Under the same condition as proposition \ref{proposition1} and with $\sigma > 0$, we have an other convergence rate that decreases exponentially fast as follows:
\begin{equation}
\label{theorem2}
\mathbb{E}[ f\left( x_k \right) ] - f^\star \leq \left( 1 - \frac{ \sigma}{n L_{\max}}\right)^k (f(x_0)-f^\star)
\end{equation}
\end{proposition}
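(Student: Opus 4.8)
The plan is to follow the classical analysis of randomized coordinate descent under strong convexity (in the spirit of Nesterov), reusing the one-step expected decrease already established in the proof of Proposition \ref{proposition1} and upgrading it with the strong convexity hypothesis \eqref{strong_convexity}. Throughout I keep the same probabilistic model as Proposition \ref{proposition1}, i.e. at each iteration the active coordinate $i$ is drawn uniformly from $\{1,\dots,n\}$, and the algorithm performs an exact one-dimensional optimization of $f$ along direction $e_i$. Note that, unlike for the $O(1/k)$ rate, the epigraph bound \eqref{R0} plays no role here.

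First I would record the coordinate-wise descent lemma implied by \eqref{lipschitz}: for any $x$ and any coordinate $i$, the quadratic upper bound $f(x + t e_i) \le f(x) + t\,[\nabla f(x)]_i + \tfrac{L_i}{2} t^2$ holds, and minimizing its right-hand side over $t$ gives $f\big(x - \tfrac{1}{L_i}[\nabla f(x)]_i e_i\big) \le f(x) - \tfrac{1}{2L_i}[\nabla f(x)]_i^2$. Since the step taken by the algorithm is an exact minimization along $e_i$, the produced iterate is at least as good as this fixed-step point, so the same inequality holds for the true update $x_{k+1}$. Taking the expectation over the uniform choice of $i$, conditionally on $x_k$, yields
\[
\mathbb{E}\big[ f(x_{k+1}) \,\big|\, x_k \big] \le f(x_k) - \frac{1}{2n} \sum_{i=1}^n \frac{1}{L_i} [\nabla f(x_k)]_i^2 \le f(x_k) - \frac{1}{2 n L_{\max}} \left\| \nabla f(x_k) \right\|_2^2 ,
\]
where the last step uses $L_i \le L_{\max}$ from \eqref{Lmax}.

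Second, I would convert the gradient-norm term into a suboptimality gap. Minimizing both sides of the strong convexity inequality \eqref{strong_convexity} over $y$ — the minimizer of the right-hand side being $y = x - \tfrac{1}{\sigma}\nabla f(x)$ — produces the Polyak--Lojasiewicz-type estimate $\left\| \nabla f(x) \right\|_2^2 \ge 2\sigma\,(f(x) - f^\star)$. Substituting this into the previous display gives the one-step contraction $\mathbb{E}[ f(x_{k+1}) \mid x_k ] - f^\star \le \big(1 - \tfrac{\sigma}{n L_{\max}}\big)\,(f(x_k) - f^\star)$. Finally I would take total expectations, invoke the tower property to remove the conditioning, and unroll the recursion over $k$ iterations to obtain exactly \eqref{theorem2}.

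The argument is short and the only points needing care — rather than genuine obstacles — are: (i) justifying that exact coordinate minimization dominates the fixed-step update, so that the descent lemma transfers to the algorithm's iterates while remaining consistent with the sampling model of Algorithm \ref{OCA}; and (ii) checking that the contraction factor $1 - \sigma/(nL_{\max})$ lies in $[0,1)$, which follows because comparing \eqref{strong_convexity} with \eqref{lipschitz} forces $\sigma \le L_i \le L_{\max}$. No hypotheses beyond Assumption \ref{assumption_convergence_prop} are required.
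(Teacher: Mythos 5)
Your argument is correct and complete: the coordinate-wise descent lemma from \eqref{lipschitz}, the domination of the fixed-step update by exact coordinate minimization, the Polyak--Lojasiewicz bound $\left\|\nabla f(x)\right\|_2^2 \geq 2\sigma\,(f(x)-f^\star)$ from \eqref{strong_convexity}, and the unrolled contraction together give exactly \eqref{theorem2}; this is the standard Nesterov-style analysis of randomized coordinate descent under strong convexity, which is precisely what the proof deferred to appendix A.1 of \cite{SaltielBenhamou_2018} carries out, so there is no substantive divergence to report. The only point worth flagging is one you already implicitly handle: the expectation in \eqref{theorem2} presupposes uniformly random coordinate selection, which is an idealization of the deterministic cyclic sweep actually written in Algorithm \ref{OCA} --- a gap in the paper's framing rather than in your proof.
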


\begin{proof}
See \cite{SaltielBenhamou_2018} appendix A.1 second part of the proof.
\end{proof}

\begin{remark}
in the case of a large $\sigma$, the second rate of convergence is much faster than the first one.
\end{remark}

\begin{remark}
Our function to be maximize is obviously not convex. However, a linear rate in the convex case is rather a good performance for the ascent optimization method. Provided the method generalizes which is still under research, this convergence rate is a good hint of the efficiency of this method.
\end{remark}

\section{Numerical results}
We present herein the result of the machine learning experiment with a real life trading strategy. For full reproducibility, full data set and corresponding python code for this algorithm is available publicly on \href{https://github.com/davidsaltiel/OCA.git}{github} with the limitation that sensitive data have been either anonymized or removed (like for instance the final pnl curve).

We first compare our method with two other states of the art methods: Recursive Feature Elimination (RFE)  and Binary Coordinate Ascent (BCA) as presented in \cite{ZARSHENAS_2016}. 

Recursive Feature Elimination (RFE) (as presented in \cite{Mangal_2018}) first fits a model and removes features  until a pre-determined number of features. Features are ranked through an external model that assigns weights to each features and RFE recursively eliminates features with the least weight at each iteration. One of the main limitation to RFE is that it requires the number of features to keep. This is hard to guess a priori and one may need to iterate much more than the desired number of feature to find an optimal feature set.

Binary Coordinate Ascent (BCA) is an iterative deterministic local optimization method to find Feature subset selection (FSS). The algorithm searches throughout the space of binary coded input variables by iteratively optimizing the objective function in each dimension at a time. Because there is no similarities used in the coordinate ascent optimization, it performs slowly compared to OCA method.

On our test sets, we examine the accuracy score (the percentage of good classification). OCA method achieves the Pareto optimality as it reaches a score of 62.80  \%  with 16\% of features used, to be compared to RFE that achieves 62.80 \%  with 19\% of features used. BCA performs poorly with its highest score given by 62.19 \% with 27\% of features used. If we take in terms of efficiency criterium, the highest score with the less feature, OCA method is the most efficient among these three methods. In comparison, with the same number of features, namely 16\%, RFE gets a score of 62.40 \%. All these figures are summarized in the table \ref{tab:comparison}.

\begin{table}
	\centering
	\resizebox{8.3cm}{!}{
	\begin{tabular}{|c|c|c|c|c|}
		\hline
		Method & OCA & RFE 24 features  & BCA & RFE 28 features\\
		\hline
		\% of features &\textcolor[rgb]{1,0, 0}{16.6} & \textcolor[rgb]{ 1,0,0}{16.6} & \textcolor[rgb]{ 0,0,0.8}{27.08}&\textcolor[rgb]{1,0.4, 0}{19.4}\\
		\hline
		Score (in \%) &\textcolor[rgb]{1, 0, 0}{62.8} & \textcolor[rgb]{1,0.4, 0}{62.39} & \textcolor[rgb]{0,0,0.8}{62.19}&\textcolor[rgb]{1,0,0}{62.8}\\
		\hline
	\end{tabular}
	}
	\caption{Method Comparison: for each row, we provide in red the best(s) (hotest) method(s) and in blue the worst (coldest) method, while intermediate methods are in orange. We can notice that OCA achieves the higher score with the minimum feature sets. For the same feature set, RFE performs worst or equally, if we want the same performance for RFE, we need to have a larger feature set. BCA is the worst method both in terms of score and minimum feature set.}
	\label{tab:comparison}%
\end{table}%

It is illuminating to look at the histogram of gain and losses of our trades over our 10 years of history. Not surprisingly, we can observed two peaks corresponding to the profit target and stop loss level as shown in figure  \ref{graph1}. This is quite obvious, but it is much better to use the pnl curve in the native currency of the underlying instrument than to look at the consolidated currency of our trading strategies to avoid foreign exchange noise as shown in figure \ref{graph2}.

\begin{figure}
	\centering
       \includegraphics[scale=0.4]{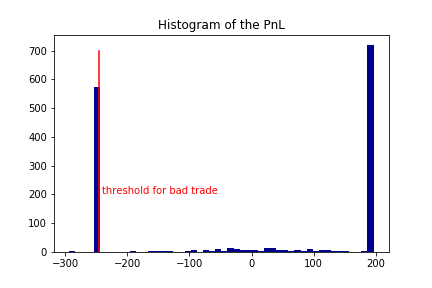}
	\caption{Histogram of the PnL in Dollars amount (re-normalized for anonymity). We can observe two peaks corresponding to the profit target and stop loss levels. This is logical as the trading strategy examined here is a fixed profit target and stop loss strategy. As soon as a trade reaches these levels, the gain or loss is crystallized. If the market stays in trading range and do not reach the level, we have a timeout in the strategy that cut the strategy regardless of its pnl. These cases are rather rare and hence represents very small bars in the histogram.}\label{graph1}
\end{figure}

\begin{figure}
	\centering
       \includegraphics[scale=0.4]{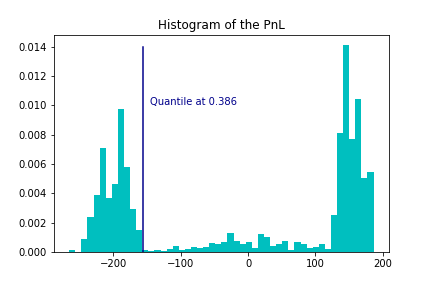}
	\caption{Same Histogram of the PnL but in Euros. Although it may seem very basic, it is important to use the native currency of the algorithmic trading strategy to avoid currency noise. Compared to figure \ref{graph1}, the only difference is to observe the profit and loss not in dollars but in euros as we consolidate all our trading strategies in euros. This is not a good practice as it introduces some additional noise in our labels as the Eur Dollar fx rate randomises slightly the pnl outcome and hence some time out exit may be confused with some bad exits.}\label{graph2}
\end{figure}

\section{Discussion}
Compared to BCA our method reduces the number of iterations as it uses the fact that variables can be regrouped into categories or classes. Below is provided the number of iterations for OCA and BCA in figure \ref{convergence}. Our method requires only 350 iterations steps ton converge as opposed to BCA that needs up to 700 iterations steps as it computes blindly variables ignoring similarities between the different variables.

\begin{figure}
	\centering
       \includegraphics[width = 4cm]{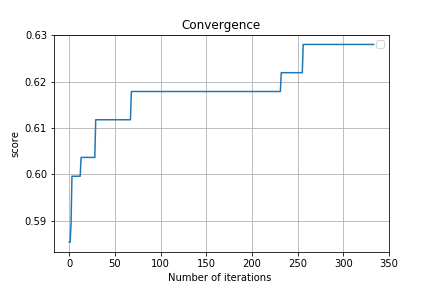} 	 \includegraphics[width = 4cm]{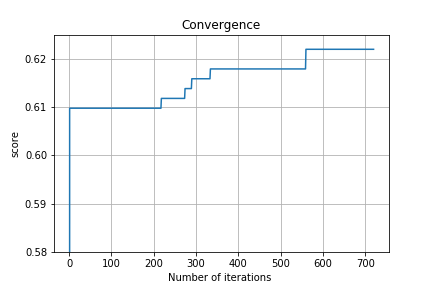}
	\caption{Iterations steps up to convergence for OCA and BCA. OCA method is on the left while BCA is on the right. We see that OCA requires around 350 iteration steps to converge while BCA requires the double around 700 iteration steps to converge}\label{convergence}
\end{figure}

Graphically, we can compute the best candidates for the four methods listed in table \ref{tab:comparison} in figure \ref{fig:comparison} and \ref{fig:comparison2}. We have taken the following color code. The hottest (or best performing) method is plotted in red, while the worst in blue. Average performing methods are plotted in orange. In order to compare finely OCA and RFE, we have plotted in figure \ref{fig:comparison2} the result of RFE for used features set percentage from 10 to 30 percent. We can notice that for the same feature set as OCA, RFE has a lower score and equally that to get the same score as OCA, RFE needs a large features set.

\begin{figure}
	\centering
       \includegraphics[width = 6 cm]{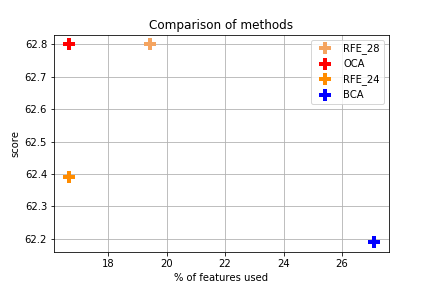}
	\caption{Comparison between the 4 methods. To qualify the best method, it should be in the upper left corner. The desirable feature is to have as little features as possible and the highest score. We can see that the red cross that represents OCA is the best. The color code has been designed to ease readability. Red is the best, orange is a slightly lower performance while blue is the worst.} 	\label{fig:comparison}
\end{figure}

\begin{figure}
	\centering
       \includegraphics[width = 6 cm]{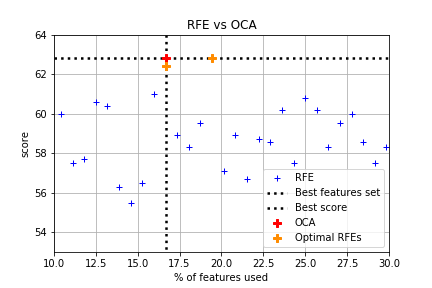}
	\caption{Comparison between OCA and RFE. Zoom on the methods. For RFE, we provide the score for various features set in blue. The two best RFE performers points are the orange cross marker points that are precisely the one listed in table \ref{tab:comparison}. The red cross marker point represents OCA. It achieves the best efficiency as it has the highest score and the smallest feature set for this score.} \label{fig:comparison2}
\end{figure}

We then look at the final goal which is to compare the trading strategy with and without machine learning. A standard way in machine learning is to split our data set between a randomized training and test set. We keep one third of our data for testing to spot any potential overfitting. If we use the standard and somehow naive way to take randomly one third of the data for our test set, we break the time dependency of our data. This has two consequences. We use in our training set some data that are after our test sets which is not realistic compared to real life. We also neglect any regime change in our data by mixing data that are not from the same period of time. However, we can do the test on this mainstream approach and compare the trading strategy with and without machine learning filtering. This is provided in figure \ref{experience1}. Since the blue curve that represents the combination of our algorithmic trading strategy and the oca method is above the orange one, we experimentally validate that using machine learning enhances the overall profitability of our trading strategy by avoiding the bad trades.

\begin{figure}[h]
	\centering
       \includegraphics[width =8cm]{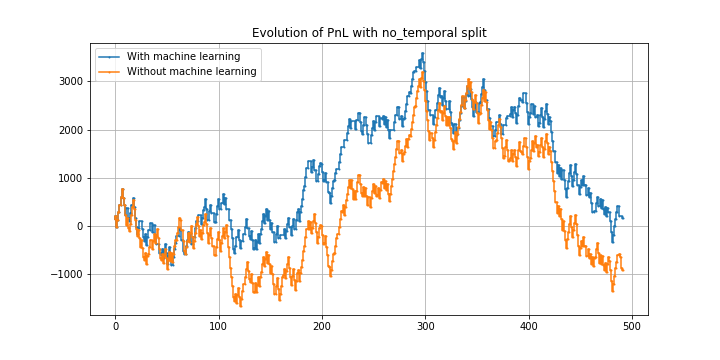}
	\caption{Evolution of the PnL with a randomized test set. The orange curve represents our algorithmic trading strategy without any machine learning filtering while the blue line is the result of the combination of our algorithmic trading strategy and the oca method to train our xgboost method} \label{experience1}
\end{figure}

If instead we split our set into two sets that are continuous in time, meaning we use as a training test the first two third of the data when there are sorted in time and as a test set the last third of the data, we get better result as the divergence between the blue and orange curve is larger. An explanation of this better efficiency may come from the fact that the non randomization of the training set makes the learning for our model easier and leads to less overfitting overall. The method of splitting the two sets: training and test set into two sets relies on a temporal split, hence the title of our figure \ref{experience2}.

\begin{figure}[h]
	\centering
       \includegraphics[width =8cm]{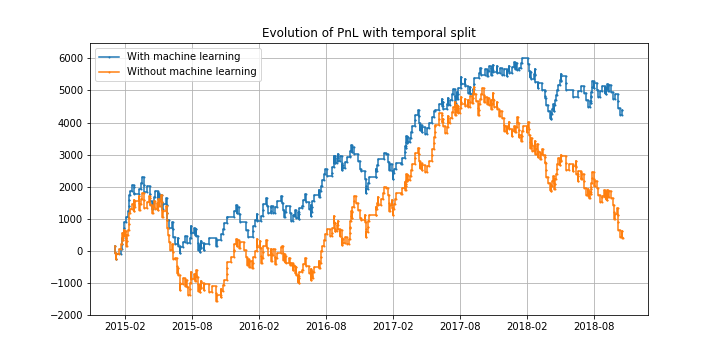}
	\caption{Evolution of the PnL with a test set given by the last third of the data to take into account temporality in our data set. The orange curve represents our algorithmic trading strategy without any machine learning filtering while the blue line is the result of the combination of our algorithmic trading strategy and the OCA method to train our xgboost method}\label{experience2}
\end{figure}

Last but not least, we can zoom the two curves when taking the test set with a temporal split. We clearly see that the method performs well to avoid selecting bad trades and hence the blue line decreases less than the orange one as shown in figure \ref{experience3}.

\begin{figure}[h]
	\centering
       \includegraphics[width =8cm]{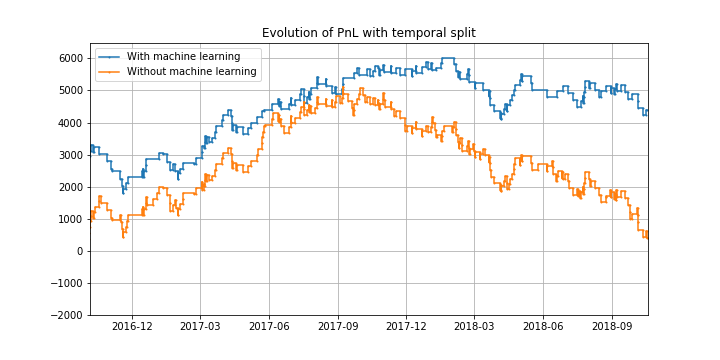}
	\caption{Zoom of the evolution of the PnL with a temporal split.The orange curve represents our algorithmic trading strategy without any machine learning filtering while the blue line is the result of the combination of our algorithmic trading strategy and the OCA method to train our xgboost method}\label{experience3}
\end{figure}

\section{Conclusion}
Algorithmic trading method can be enhanced with supervised learning method. The challenge is to use measurements and information regrouped into features to detect before orders are electronically sent to the exchange highly probable non successful trades. Because the logic of the algorithmic trading strategy may be challenging to understand, an agnostic supervised learning method can come to the rescue. However, choosing the best features in our initial features set is tricky as more data simultaneously provide additional information and noise at the same time. We present here OCA, a new feature selection method that leverages similarities between features. This method is not very demanding in terms of features knowledge and can efficiently select best features without testing all possible features sets. This changes the features selection problem from an NP hard one into a polynomial one. When implemented on real case strategies, we can empirically validate that the supervised learning method enhances overall trading profitability. As we ask the algorithm to detect in pre-trade operations highly unsuccessful candidates, the method is logically able to reduce overall draw-downs. The method developed herein is quite general and can be applied to any general supervised learning binary classification. In further work, we would like to explore reinforcement learning method to adjust our method for capacity constraints as this is a limitation of the supervised learning approach.
\bibliography{mybibfile}

\begin{thebibliography}{18}
\expandafter\ifx\csname natexlab\endcsname\relax\def\natexlab#1{#1}\fi
\providecommand{\url}[1]{\texttt{#1}}
\providecommand{\href}[2]{#2}
\providecommand{\path}[1]{#1}
\providecommand{\DOIprefix}{doi:}
\providecommand{\ArXivprefix}{arXiv:}
\providecommand{\URLprefix}{URL: }
\providecommand{\Pubmedprefix}{pmid:}
\providecommand{\doi}[1]{\href{http://dx.doi.org/#1}{\path{#1}}}
\providecommand{\Pubmed}[1]{\href{pmid:#1}{\path{#1}}}
\providecommand{\bibinfo}[2]{#2}
\ifx\xfnm\relax \def\xfnm[#1]{\unskip,\space#1}\fi
%Type = Article
\bibitem[{Almuallim and Dietterich(1994)}]{Almuallim_1994}
\bibinfo{author}{Almuallim, H.}, \bibinfo{author}{Dietterich, T.G.},
  \bibinfo{year}{1994}.
\newblock \bibinfo{title}{Learning boolean concepts in the presence of many
  irrelevant features}.
\newblock \bibinfo{journal}{Artificial Intelligence} \bibinfo{volume}{69},
  \bibinfo{pages}{279--305}.
%Type = Article
\bibitem[{Blum and Langley(1997)}]{Blum_1997}
\bibinfo{author}{Blum, A.L.}, \bibinfo{author}{Langley, P.},
  \bibinfo{year}{1997}.
\newblock \bibinfo{title}{Selection of relevant features and examples in
  machine learning}.
\newblock \bibinfo{journal}{Artif. Intell.} \bibinfo{volume}{97},
  \bibinfo{pages}{245--271}.
%Type = Article
\bibitem[{Chaboud et~al.(2015)Chaboud, Hjalmarsson and Vega}]{Chaboud_2015}
\bibinfo{author}{Chaboud, Alain p.and~Chiquoine, B.},
  \bibinfo{author}{Hjalmarsson, E.}, \bibinfo{author}{Vega, C.},
  \bibinfo{year}{2015}.
\newblock \bibinfo{title}{Rise of the machines: Algorithmic trading in the
  foreign exchange market}.
\newblock \bibinfo{journal}{The Journal of Finance} \bibinfo{volume}{69},
  \bibinfo{pages}{2045--2084}.
%Type = Book
\bibitem[{Chan(2013)}]{Chan_2013}
\bibinfo{author}{Chan, E.}, \bibinfo{year}{2013}.
\newblock \bibinfo{title}{Algorithmic Trading: Winning Strategies and Their
  Rationale}.
\newblock \bibinfo{edition}{1st} ed., \bibinfo{publisher}{Wiley Publishing}.
%Type = Article
\bibitem[{Fung(2017)}]{Stanley_2017}
\bibinfo{author}{Fung, S.P.Y.}, \bibinfo{year}{2017}.
\newblock \bibinfo{title}{Optimal online two-way trading with bounded number of
  transactions}.
\newblock \bibinfo{journal}{CoRR} .
%Type = Article
\bibitem[{Giuseppe Di~Graziano(2014)}]{Graziano_2014}
\bibinfo{author}{Giuseppe Di~Graziano, D.B.A.}, \bibinfo{year}{2014}.
\newblock \bibinfo{title}{Optimal trading stops and algorithmic trading}.
\newblock \bibinfo{journal}{SSRN} \URLprefix
  \url{https://ssrn.com/abstract=2381830}.
%Type = Article
\bibitem[{Goldstein et~al.(2014)Goldstein, Viljoen, Westerholm and
  Zheng}]{Goldstein_2014}
\bibinfo{author}{Goldstein, M.}, \bibinfo{author}{Viljoen, T.},
  \bibinfo{author}{Westerholm, P.J.}, \bibinfo{author}{Zheng, H.},
  \bibinfo{year}{2014}.
\newblock \bibinfo{title}{Algorithmic trading, liquidity, and price discovery:
  An intraday analysis of the spi 200 futures}.
\newblock \bibinfo{journal}{The Financial Review} \bibinfo{volume}{49},
  \bibinfo{pages}{245--270}.
%Type = Article
\bibitem[{Guyon and Elisseeff(2003)}]{Guyon_2003}
\bibinfo{author}{Guyon, I.}, \bibinfo{author}{Elisseeff, A.},
  \bibinfo{year}{2003}.
\newblock \bibinfo{title}{An introduction to variable and feature selection}.
\newblock \bibinfo{journal}{J. Mach. Learn. Res.} \bibinfo{volume}{3},
  \bibinfo{pages}{1157--1182}.
%Type = Book
\bibitem[{Hastie et~al.(2009)Hastie, Tibshirani and Friedman}]{HastieEtAl_2009}
\bibinfo{author}{Hastie, T.}, \bibinfo{author}{Tibshirani, R.},
  \bibinfo{author}{Friedman, J.H.}, \bibinfo{year}{2009}.
\newblock \bibinfo{title}{The elements of statistical learning: data mining,
  inference, and prediction, 2nd Edition}.
\newblock Springer series in statistics, \bibinfo{publisher}{Springer}.
%Type = Article
\bibitem[{Kirilenko et~al.(2017)Kirilenko, Kyle, Samadi and
  Tuzun}]{Kirilenko_2017}
\bibinfo{author}{Kirilenko, A.}, \bibinfo{author}{Kyle, A.S.},
  \bibinfo{author}{Samadi, M.}, \bibinfo{author}{Tuzun, T.},
  \bibinfo{year}{2017}.
\newblock \bibinfo{title}{The flash crash: High-frequency trading in an
  electronic market}.
\newblock \bibinfo{journal}{Journal of Finance} \bibinfo{volume}{72},
  \bibinfo{pages}{967--998}.
%Type = Inproceedings
\bibitem[{Koller and Sahami(1996)}]{Koller_1996}
\bibinfo{author}{Koller, D.}, \bibinfo{author}{Sahami, M.},
  \bibinfo{year}{1996}.
\newblock \bibinfo{title}{Toward optimal feature selection}, in:
  \bibinfo{booktitle}{Proceedings of the Thirteenth International Conference on
  International Conference on Machine Learning}, \bibinfo{publisher}{Morgan
  Kaufmann Publishers Inc.}, \bibinfo{address}{San Francisco, CA, USA}. pp.
  \bibinfo{pages}{284--292}.
%Type = Techreport
\bibitem[{Labadie and Lehalle(2010)}]{Mauricio_2010}
\bibinfo{author}{Labadie, M.}, \bibinfo{author}{Lehalle, C.A.},
  \bibinfo{year}{2010}.
\newblock \bibinfo{title}{Optimal algorithmic trading and market
  microstructure}.
\newblock \bibinfo{type}{Working Papers}. HAL.
%Type = Article
\bibitem[{{Mangal} and {Holm}(2018)}]{Mangal_2018}
\bibinfo{author}{{Mangal}, A.}, \bibinfo{author}{{Holm}, E.A.},
  \bibinfo{year}{2018}.
\newblock \bibinfo{title}{{A comparative study of feature selection methods for
  stress hotspot classification in materials}}.
\newblock \bibinfo{journal}{ArXiv e-prints} .
%Type = Article
\bibitem[{Mitra et~al.(2002)Mitra, Murthy and Pal}]{Mitra_2002}
\bibinfo{author}{Mitra, P.}, \bibinfo{author}{Murthy, C.A.},
  \bibinfo{author}{Pal, S.K.}, \bibinfo{year}{2002}.
\newblock \bibinfo{title}{Unsupervised feature selection using feature
  similarity}.
\newblock \bibinfo{journal}{IEEE Trans. Pattern Anal. Mach. Intell.}
  \bibinfo{volume}{24}, \bibinfo{pages}{301--312}.
%Type = Article
\bibitem[{{Saltiel} and {Benhamou}(2018)}]{SaltielBenhamou_2018}
\bibinfo{author}{{Saltiel}, D.}, \bibinfo{author}{{Benhamou}, E.},
  \bibinfo{year}{2018}.
\newblock \bibinfo{title}{{Feature selection with optimal coordinate ascent
  (OCA)}}.
\newblock \bibinfo{journal}{arXiv e-prints} ,
  \bibinfo{pages}{arXiv:1811.12064}\href{http://arxiv.org/abs/1811.12064}{{\tt
  arXiv:1811.12064}}.
%Type = Article
\bibitem[{Tuv et~al.(2009)Tuv, Borisov, Runger and Torkkola}]{Tuv_2009}
\bibinfo{author}{Tuv, E.}, \bibinfo{author}{Borisov, A.},
  \bibinfo{author}{Runger, G.}, \bibinfo{author}{Torkkola, K.},
  \bibinfo{year}{2009}.
\newblock \bibinfo{title}{Feature selection with ensembles, artificial
  variables, and redundancy elimination}.
\newblock \bibinfo{journal}{J. Mach. Learn. Res.} \bibinfo{volume}{10},
  \bibinfo{pages}{1341--1366}.
%Type = Article
\bibitem[{Vezeris et~al.(2018)Vezeris, Kyrgos, Schinas and Loss}]{Vezeris_2018}
\bibinfo{author}{Vezeris, D.}, \bibinfo{author}{Kyrgos, T.},
  \bibinfo{author}{Schinas, C.T.P.}, \bibinfo{author}{Loss, S.},
  \bibinfo{year}{2018}.
\newblock \bibinfo{title}{Trading strategies comparison in combination with an
  macd trading system}.
\newblock \bibinfo{journal}{J. Risk Financial Manag} \bibinfo{volume}{11},
  \bibinfo{pages}{56}.
%Type = Article
\bibitem[{Zarshenas and Suzuki(2016)}]{ZARSHENAS_2016}
\bibinfo{author}{Zarshenas, A.}, \bibinfo{author}{Suzuki, K.},
  \bibinfo{year}{2016}.
\newblock \bibinfo{title}{Binary coordinate ascent: An efficient optimization
  technique for feature subset selection for machine learning}.
\newblock \bibinfo{journal}{Knowledge-Based Systems} \bibinfo{volume}{110},
  \bibinfo{pages}{191 -- 201}.

\end{thebibliography}

\end{document}